\def\eqref#1{equation~\ref{#1}}
\def\1{\bm{1}}
\DeclareMathAlphabet{\mathsfit}{\encodingdefault}{\sfdefault}{m}{sl}
\SetMathAlphabet{\mathsfit}{bold}{\encodingdefault}{\sfdefault}{bx}{n}
\newcommand{\E}{\mathbb{E}}
\newcommand{\R}{\mathbb{R}}
\newtheorem{theorem}{Theorem}[section]
\newtheorem{condition}[theorem]{Condition}
\theoremstyle{definition}
\newtheorem{definition}[theorem]{Definition}
\newcommand{\wt}{\widetilde}
\renewcommand{\epsilon}{\varepsilon}
\renewcommand{\phi}{\varphi}
\renewcommand{\tilde}{\wt}
\theoremstyle{plain}
\newtheorem{objective}[theorem]{Objective}
\theoremstyle{definition}
\newcommand{\minmax}{\mathsf{Min}\text{-}\mathsf{Max}}
\newcommand{\nnz}{\mathbb{NNZ}}
\newcommand*{\RN}[1]{\expandafter\@slowromancap\romannumeral #1@}
\newcommand{\printfnsymbol}[1]{%
  \textsuperscript{\@fnsymbol{#1}}%
}
\title{Weighted Diversified Sampling for Efficient Data-Driven Single-Cell Gene-Gene Interaction Discovery}
\author[1]{Yifan Wu}
\author[2]{Yuntao Yang}
\author[1]{Zirui Liu}
\author[2]{Zhao Li}
\author[1]{Khushbu Pahwa}
\author[2]{Rongbin Li}
\author[2]{Wenjin Zheng\thanks{Corresponding authors} }
\author[1]{Xia Hu\printfnsymbol{1}}
\author[3]{Zhaozhuo Xu \printfnsymbol{1}}
\affil[1]{Rice University}
\affil[2]{UTHealth Houston}
\affil[3]{Stevens Institute of Technology}
\begin{document}

\maketitle

\begin{abstract}
    Gene-gene interactions play a crucial role in the manifestation of complex human diseases. Uncovering significant gene-gene interactions is a challenging task. Here, we present an innovative approach utilizing data-driven computational tools, leveraging an advanced Transformer model, to unearth noteworthy gene-gene interactions. Despite the efficacy of Transformer models, their parameter intensity presents a bottleneck in data ingestion, hindering data efficiency.  To mitigate this, we introduce a novel weighted diversified sampling algorithm. This algorithm computes the diversity score of each data sample in just two passes of the dataset, facilitating efficient subset generation for interaction discovery. Our extensive experimentation demonstrates that by sampling a mere 1\% of the single-cell dataset, we achieve performance comparable to that of utilizing the entire dataset.
\end{abstract}

\section{Introduction}
Gene-gene interactions play a crucial role in the manifestation of complex human diseases, including multiple sclerosis~\citep{brassat2006multifactor,motsinger2007complex,slim2022systematic}, pre-eclampsia~\citep{li2022identifying,diab2021molecular,williams2011genetics,oudejans2008placental}, and Alzheimer's disease~\citep{ghebranious2011pilot,hohman2016discovery}. Computational tools equipped with machine learning~(ML) prove effective in uncovering these significant gene interactions~\citep{mckinney2006machine,cui2022gene,yuan2021deep,wei2024self,upstill2013machine}. By learning an ML model on massive single-cell transcriptomic data, we can identify gene-gene interactions associated with complex but common human diseases. Existing models rely on prior knowledge such as transcription factors (TF)~\citep{wang2019systems, bbab142, 10.1093/bib/bbab325, Shu2021} or existing gene-gene interaction (GGI) networks  ~\citep{bbaa303, doi:10.1073/pnas.1911536116}, to infer new relationships. Although GGI networks and TFs are crucial for mapping biological processes, they frequently suffer from high false-positive rates and biases, particularly in large-scale in vitro experiments \citep{bmcbioinformatics2020, jxb2021}. In response to these challenges, we propose that gene-gene interactions can be uncovered using purely data-driven methods.

\textbf{The Rise of Transformers on Single-Cell Transcriptomic Data.} Recent advances in natural language processing, particularly the development of Transformer models~\citep{vaswani2017attention}, have demonstrated significant potential in biological data analysis ~\citep{Hao2023.05.29.542705,Theodoris2023,Bian2024.01.25.577152,cui2024scgpt}. Transformer models are known for their ability to capture the dependencies between gene expressions. The information fused through the self-attention mechanism~\citep{vaswani2017attention} is particularly suited for analyzing the intricate relationships in single-cell transcriptomic data. On the other hand, Transformer models also demonstrated superior performance when we scaled up their parameter size~\citep{Hao2023.05.29.542705}. This scaling capacity raises the researcher's interest in training and deploying parameter-intensive Transformer models, denoted as single-cell foundation models~\citep{cui2024scgpt}. We would like to take this advantage for better gene-gene interaction discovery by identifying feature interactions within Transformer models.


\textbf{Data-Driven Gene-Gene Interaction via Attention.} In this work, we would like to advance the gene-gene interaction discovery with the Transformer models that have demonstrated superior performance on single-cell transcriptomic data. We see the self-attention mechanism ~\citep{vaswani2017attention} as a pathway to facilitate the modeling of gene-gene interactions. In single-cell foundation models, the input to the model is a bag of $m$ gene expressions for a single cell. Next, in each layer and each head of the Transformer, there will be an attention map with shape $m\times m$ generated for this cell. Each entry of this attention map represents the interaction between two genes in this layer and this head. Assuming that we have a perfect Transformer that takes a cell gene expressions and correctly predicts if it is infected by a disease, we view the attention map of this cell as a strong indicator of disease-oriented gene-gene interactions.

\textbf{Efficiency Challenge in Data Ingestion.} Despite the transformative capabilities of Transformer models, one significant challenge remains: the efficient ingestion and processing of massive volumes of single-cell transcriptomic data. We are utilizing Transformer models with parameter sizes that exceed the hardware capacity, particularly that of the graphics processing unit (GPU). As a result, given a pre-trained Transformer, we have to perform batch-size computation on a massive single-cell transcriptomic dataset for computing gene-gene interactions through attention maps. This batch-size computation significantly enlarges the total execution time for scientific discovery. Moreover, the hardware in the real-world deployment environment for gene-gene interaction detection may have even more limited resources. Therefore, the current computational framework cannot support gene-gene interaction discovery on real-world single-cell transcriptomic datasets.

\textbf{Our Proposal: Two-Pass Weighted Diversified Sampling.}
In this paper, we introduce a novel weighted diversified sampling algorithm. This randomized algorithm computes the diversity score of each data sample in just two passes of the dataset. The proposed algorithm is highly memory-efficient and requires constant memory that is independent of the cell dataset size. Our theoretical analysis suggests that this diversity score estimates the density of the $\minmax$ kernel defined on the cell-level gene expressions, which provides the foundation and justification of the proposed strategy. Through extensive experiments, we demonstrate how the proposed sampling algorithm facilitates efficient subset generation for interaction discovery. The results show that by sampling a mere 1\% of the single-cell dataset, we can achieve performance comparable to that of utilizing the entire dataset.

\textbf{Our Contributions.} We summarize our contributions as fellows.
\begin{itemize}[nosep, leftmargin=*]
    \item We introduce a computational framework that advances the discovery of significant gene-gene interactions with CelluFormer, our proposed Transformer model that is trained on single-cell transcriptomic data.
    \item We pinpoint the challenge in data ingestion for the data-driven gene-gene interaction. Moreover, we argue that we should perform diversified sampling that selects a representative subset of single-cell transcriptomics data to fulfill the objective.
    \item We develop a diversity score for every cell in the dataset based on the $\minmax$ kernel density. Moreover, we perform a randomized algorithm that efficiently estimates the $\minmax$ kernel density for each cell. Furthermore, we use the estimated density to generate an effective subset for gene-gene interaction.
\end{itemize}

\section{Data-Driven Single-Cell Gene-Gene Interaction Discovery}\label{sec:data-driven_gene-gene}
In this section, we propose a computing framework to perform gene-gene interaction discovery on single-cell transcriptomic data. We start by introducing the format of single-cell transcriptomic data. Next, we propose the formulation of our CelluFormer model tailored to single-cell data. Next, we present our multi-cell-type training to build an effective transformer model on single-cell data. Finally, given a pre-trained transformer, we showcase how to perform gene-gene interaction discovery by analyzing the attention maps. 
\subsection{Single-Cell Transcriptomic Data}\label{sec:single-cell_data}
Single-cell transcriptomic is a technology that profiles gene expression at the individual cell level. The profiled results, namely single-cell transcriptomic data, provide a unique landscape of gene expressions. In contrast to traditional bulk RNA-seq analysis, single-cell transcriptomic data allows for cell-level sequencing, which captures the variability between individual cells \citep{bbaa303}. Leveraging this high-resolution data allows scientists to gain insights into developmental processes, disease mechanisms, and cellular responses to environmental changes.

The single-cell transcriptomic data can be formulated as a set of high-dimensional and sparse feature vectors. We denote a single-cell transcriptomic dataset at $X$, where each cell $x\in X$ is a sparse vector with dimensionality $V\in \mathbb{N}_{+}$. Here $V$ represents the total number of genes we can observe in $X$. Since cell $x\in \R^V$ is a sparse vector, we can represent $x$ as a set $\{(i_1,v_1), (i_2,v_2), \cdots, (i_k,v_k)\}$. In this set, every tuple $(i,v)$ represents the expression of gene $i\in [V]$ with expression level $v\in \R$. Besides we can also denote cell $x$ as $[x_1,x_2,\cdots, x_V]$, where most of the $x_i$s are zeros.

In this data formulation, single-cell transcriptomic data for each cell is represented as a set of gene expressions, with different cells expressing varying genes. Additionally, even when two cells express the same gene, their expression levels may differ. Our research objective is to identify gene-gene interactions within the vocabulary $V$ that drive complex biological processes and disease mechanisms.

\subsection{CelluFormer: A Single-Cell Transformer}\label{sec:set_transformer}
Here, we propose our Transformer architecture, CelluFormer, to learn gene-gene interactions within single-cell transcriptomic data. Based on the set formulation of single-cell transcriptomic data, we believe that the order of genes is arbitrary and biologically meaningless. 
Similar to scGPT \citep{cui2024scgpt}, and scFoundation \citep{Hao2024}, our method adopts a permutation-invariant design. We define our permutation-invariant condition as follows.

\begin{condition}\label{condition:permute}
Let $X$ denote a single-cell transcriptomic dataset. Given a single-cell data of cell $x\in X$, denoted as a set $\{(i_1,v_1), (i_2,v_2), \cdots, (i_k,v_k)\}$, a function $f:X\to \R$ should satisfy that, for any permutation $\pi$, $f(x)=f(\pi(x))$.
\end{condition}

\begin{wrapfigure}{r}{0.42\textwidth}
\vspace{-4mm}
  \begin{center}
    \includegraphics[width=0.4\textwidth]{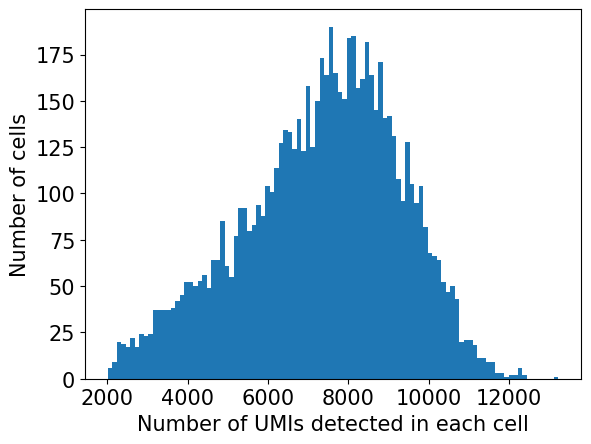}
  \end{center}
  \caption{Distribution of Sequence Lengths in L6\_CT Cell Type Data.}\label{fig:seq_len}
  \vspace{-3mm}
\end{wrapfigure}

We see Condition~\ref{condition:permute} as a fundamental difference between the proposed Transformer and the sequence Transformers~\citep{vaswani2017attention} widely used in natural language processing.  For sequence Transformers, we have to ingest sequential masks during the training to ensure that the current token does not interact with the future token. Additionally, during the inference, the sequence Transformer should perform a step-by-step generation for each token. As a result, the sequence Transformer does not satisfy Condition~\ref{condition:permute}. Moreover, the difference between CelluFormer and a vision Transformer~\citep{dosovitskiy2020image} is that the vision Transformer has a fixed sequence length for every input data sample. However, the number of genes expressed in each cell can vary a lot. For example, according to Figure~\ref{fig:seq_len}, the number of genes expressed in a single cell can be up to 12,000 or more. Thus, we utilize a padding mask for the classification downstream task. Additional details regarding the implementation of CelluFormer are provided in Appendix \ref{app:exp_details}.


\subsection{Multi-Cell-Type Training of CelluFormer}
\begin{wraptable}{r}{0.6\textwidth}
\vspace{-4mm}
\caption{Performance comparison of models on neuronal cell dataset.}
\label{tab:model_performance}
\scriptsize
\centering
\begin{tabular}{cccc}
\midrule
\textbf{Model} & \textbf{Training Dataset} & \textbf{F1 Score} & \textbf{Accuracy} \\
\midrule
\multirow{7}{*}{MLP} & Pax6 & 78.91 & 82.71 \\
 & L5\_ET & 62.02 & 73.31 \\
 & L6\_CT & 91.14 & 92.01 \\
 & L6\_IT\_Car3 & 95.34 & 95.51 \\
 & L6b & 86.01 & 88.76 \\
 & Chandelier & 81.66 & 84.56 \\
 & L5\_6\_NP & 89.33 & 90.42 \\
 & All Neuronal Cell Types & 97.23 & 97.25 \\
\midrule
CelluFormer & All Neuronal Cell Types & \textbf{98.12} & \textbf{98.12} \\
\midrule
\end{tabular}%
\vspace{-4mm}
\end{wraptable}

We observe that there is a significant performance difference between Transformer models if we feed them with different styles of single-cell transcriptomic data.  It is known that cells can be categorized into different types based on their functionality. For instance, neuronal cells represent the cell types that fire electric signals called action potentials across a neural network~\citep{levitan2015neuron}. Our study suggests that Transformers should be trained on single-cell transcriptomic data from various cell types to achieve better performance. We showcase an example in Table~\ref{tab:model_performance}. We train a Transformer model to classify whether a cell is an Alzheimer's disease-infected cell or not. According to our study, CelluFormer proposed in Section~\ref{sec:set_transformer} trained on neuronal cells outperforms traditional multilayer perceptron (MLP) with downstream training on a single cell type. However, we do not see this gap when we perform training of CelluFormer on a single cell type. As a result, we see that the Transformers generally prefer massive exposure to the single-cell transcriptomic data.



\subsection{Gene-Gene Interaction Discovery via Attention Maps}\label{sec:gene-gene_attn}
In this paper, we would like to accomplish the following objective.

\begin{objective}[Gene-gene interaction discovery]\label{obj:gene_gene_discover}
    Let $X$ denote a single-cell transcriptomic dataset. Let $\mathcal{V}$ denote the genes expressed in at least one $x\in X$.  Let $f:X\to\R$ denote a permutation invariant (see Condition~\ref{condition:permute}) CelluFormer. $f$ can successfully predict whether any $x\in X$ is infected by disease $D$. We would like to find a gene-gene pair $(v_1,v_2)$ that contributes the most to $f$'s performance in $X$. Here $v_1,v_2\in \mathcal{V}$.
\end{objective}

\begin{figure}[ht]
  \centering
  \includegraphics[width=1\textwidth]{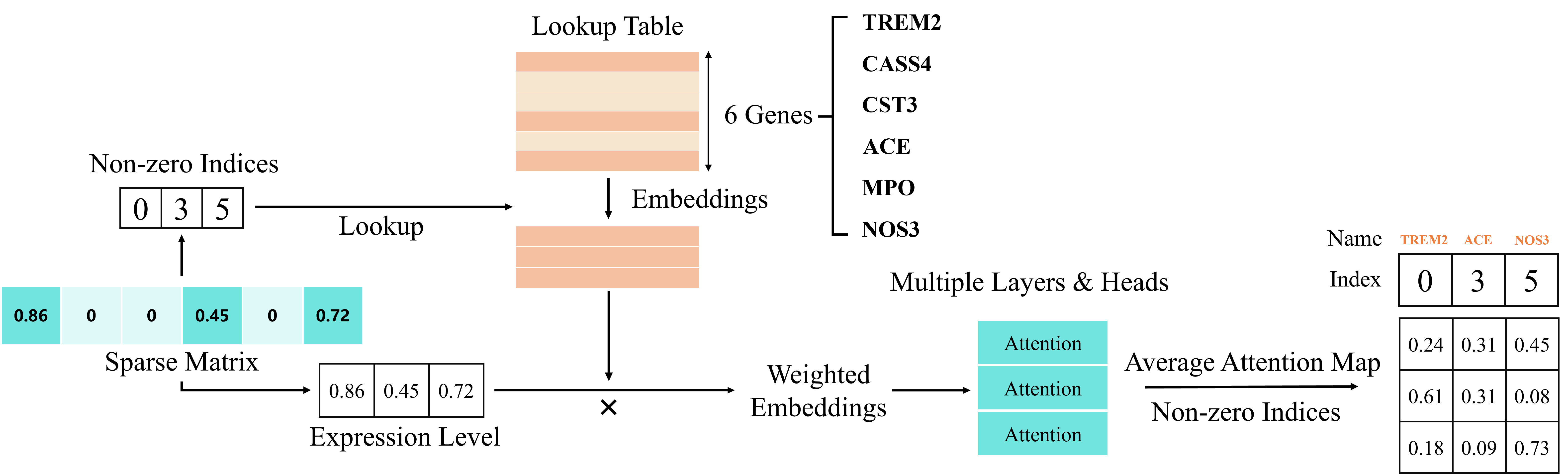}
  \caption{Gene-gene interaction modeling with attention maps.}
  \label{fig:attention_map}
\end{figure}

\begin{figure}[ht]
  \centering
  \includegraphics[width=0.9\textwidth]{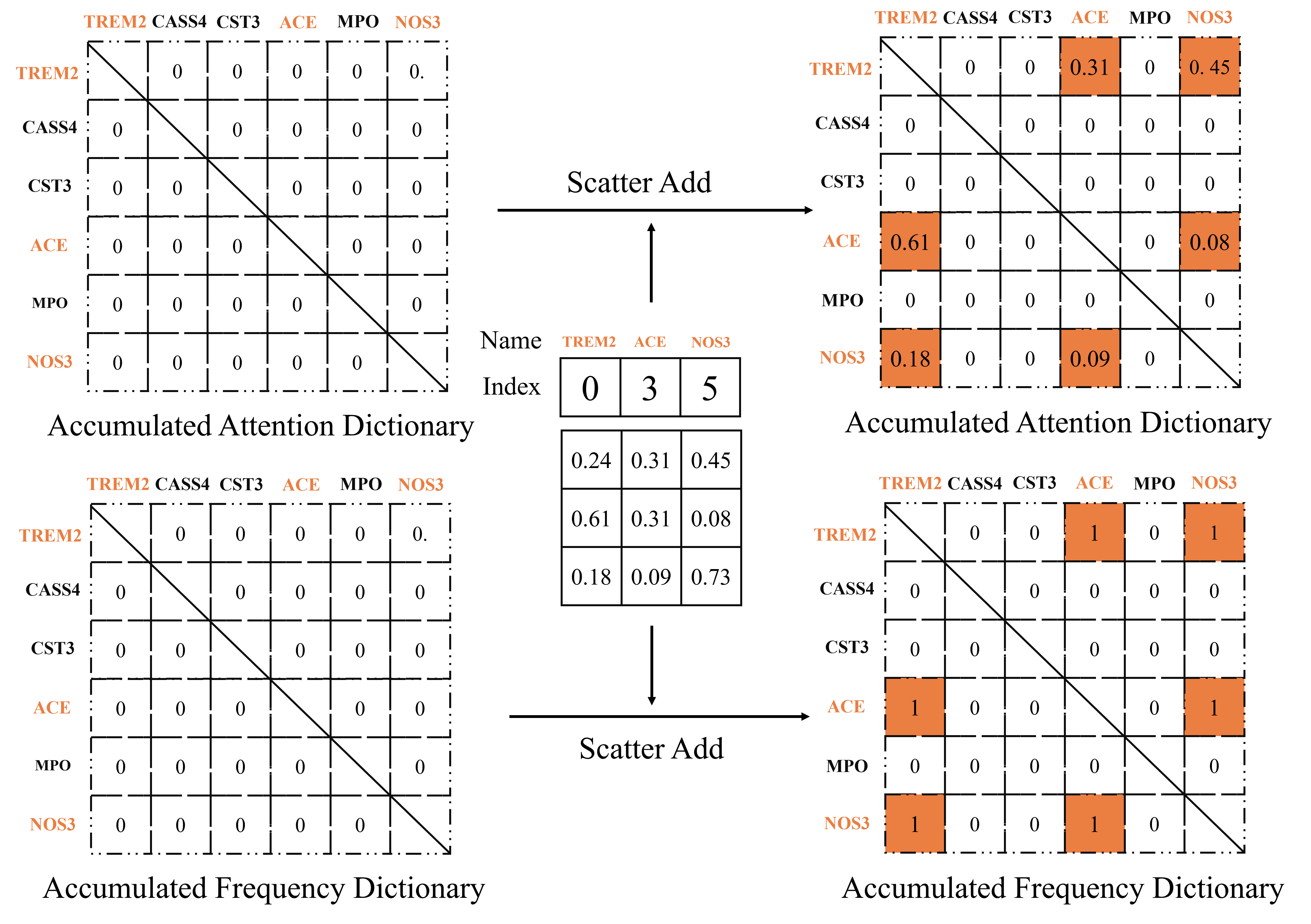}
  \caption{Accumulating multiple cells' average attention maps.}
  \label{fig:accumulate_attn}
\end{figure}

We see the self-attention mechanism of Transformers on a cell's set style gene expressions as a pathway to model gene-gene interactions. CelluFormer takes a cell $x$'s gene expressions and produces an attention map $A_{i,j}\in \R^{m\times m}$ at encoder block $i$ and attention head $j$. Here $m$ represents the number of genes expressed in cell $x$. Since Transformer architecture uses the Softmax function to produce $A_{i,j}$, we can view the $p$th row of  $A_{i,j}$ as the interaction between gene $p$ and all other genes in $x$. As a result, an attention map is a natural indicator of gene-gene interactions. Moreover, 
if we have a perfect Transformer that takes a cell $x$ gene expressions and correctly predicts if it is infected by a disease, we view the attention map of this cell as an indicator of disease-oriented gene-gene interactions. Following this path, we propose a gene-gene interaction modeling approach as illustrated in Figure~\ref{fig:attention_map}. For each cell $x$, we represent it as a set and generate a bag of embeddings from the gene embedding table. Next, we use the expression levels of each gene as a scaling factor for each gene's embedding. Next, we take the average attention maps of all layers and all heads to obtain a gene-gene interaction map in this cell. 

In Objective~\ref{obj:gene_gene_discover}, we would like to see not only the gene-gene interactions just for cell $x$ but also the statistical evidence of how two genes interact in the dataset $X$. As a result, we propose to accumulate multiple cells' averaged attention maps as illustrated in Figure~\ref{fig:accumulate_attn}. For $X$, we initialize $Z_0\in 0^{V\times V}$ matrix as the overall attention map before aggregation and $M_0 \in 0^{V\times V}$ as the overall frequency dictionary before aggregation. Next, for each cell $x$ in the dataset, we remove its diagonal value in its averaged attention map as it represents self-interaction. Next, we perform scatter addition operations that merge $x$'s averaged attention map back to $Z_0$. We let $Z_{ij}$ add the interaction value of gene $v_i$ and $v_j$ in the average attention map of cell $x$ obtained in the Transformer model. Simultaneously, to eliminate the dataset bias of expressed genes, we count the number of appearances for each gene pair in the dataset. Once again, we perform scatter addition to record the counts back to $M_0$. This is done by updating $M_0$ through scatter addition, where $M_{ij} = M_{ij} + 1$ for every occurrence of the gene pair $(v_i, v_j)$ in the dataset. Finally, we rank the off-diagonal values in $Z$ where $Z_{ij} \leftarrow \frac{Z_{ij}}{M_{ij}}$ to retrieve the top gene-gene interaction.

\section{Weighted Diversified Sampling}
In this section, we start by showcasing the data-efficiency problem when we use the trained CelluFormer for gene-gene interaction discovery. Following this, we define a diversity score for each cell in the dataset and propose a two-pass randomized algorithm to efficiently compute it. Lastly, we propose a weighted diversified sampling strategy on massive single-cell data. 

\subsection{Data-Intensive Computation for Gene-Gene Interaction Discovery}
As illustrated in Section~\ref{sec:gene-gene_attn}, once we have a pre-trained CelluFormer that can successfully predict whether a cell is infected by a disease or not with its gene expressions, we can perform gene-gene interaction discovery by passing massive cells into this model and get the accumulated attention map as Figure~\ref{fig:accumulate_attn}. However, this process requires data-intensive computation. For every cell in the dataset, we first need to compute the average attention map as illustrated in Figure~\ref{fig:attention_map}. Next, we perform aggregations as shown in Figure~\ref{fig:accumulate_attn}. It is known that CelluFormer uses plenty of trainable parameters to achieve good disease infection classification performance. As a result, the computation complexity for generating a cell's averaged attention map is expensive. Moreover, since the attention map for cell $x$ is $m\times m$, where $m$ is the number of genes expressed in $x$. According to Figure~\ref{fig:seq_len}, we see that $m$ can be 12,000 or more. These giant attention maps consume the limited high bandwidth memory (HBM) in the graphics processing unit. Therefore, we have to perform batch-wise computation on a massive cell dataset for computing gene-gene interactions. 
Moreover, given the scale of the dataset, \textit{any sampling algorithm with a runtime that grows exponentially with the dataset size is impractical.}

\begin{algorithm}[h]
   \caption{Two-Pass Algorithm for Estimating $\minmax$ Density}
\begin{algorithmic}
\label{alg:two_pass_diverse}
   \STATE {\bfseries Input:} Cell dataset $X$, 0-bit CWS function family $\mathcal{H}$ (see Definition~\ref{def:cws}), Hash range $B$, Rows $R$
   \STATE {\bfseries Output:} $\minmax$ density set $w$ for every $x\in X$.
   \STATE {\bfseries Initialize:} $ A \leftarrow 0^{R \times B}$
   \STATE Generated $R$ independent 0-bit CWS functions ${h_1,\dots,h_R}$ from $\mathcal{H}$ with range $B$ at Random.\\\COMMENT{\textcolor{blue}{We set $R=O(\log{|X|})$ following the theoretical analysis of Definition~\ref{def:cws}}}
   \STATE $W\leftarrow \emptyset$
   \FOR{$x\in X$}
    \FOR{$r = 1 \rightarrow R$}
      \STATE $A_{r, h_r(x)} += 1$
     \ENDFOR
    \ENDFOR
    \FOR{$x\in X$}
    \FOR{$r = 1 \rightarrow R$}
      \STATE $w_x\leftarrow w_x+A_{r, h_r(x)}$
     \ENDFOR
     \STATE $w_x \leftarrow w_x/R$ \COMMENT{\textcolor{blue}{$w_x$ is the estimated $\minmax$ density for $x$.}}
     \STATE $W\leftarrow \{w_x\}$
    \ENDFOR
    \RETURN  $W$
\end{algorithmic}
\end{algorithm}

\subsection{Two-Pass Randomized Algorithm for Computing \texorpdfstring{$\minmax$}{~} Density}\label{sec:WDS definition}
In this work, we would like to address this data-efficiency challenge by raising and asking the following research question: \textit{Can we find a representative and small subset from the large cell dataset and still perform successful gene-gene interaction discovery?} Moreover, we would like the procedure for finding this small subset as efficient as possible.

We would like to answer this question by proposing a diversity score of a cell in the dataset. To begin with, we introduce the $\minmax$ similarity between two cell's gene expressions.
\begin{definition}[$\minmax$ Similarity]\label{def:min-max}
    Given two cell's gene expressions, denoted as $x, y\in \R^{V}$ (see Section~\ref{sec:single-cell_data}), we define their $\minmax$ similarity as:
    $\minmax (x,y) = \frac{\sum_{i}^{V} \min (x_i,y_i)}{\sum_{i}^{V} \max (x_i,y_i)}$.
\end{definition}

According to the definition, $\minmax (x,y)\in [0,1]$. Higher $\minmax$ means that two cell's gene expressions are closer to each other. $\minmax$ is widely viewed as a kernel~\citep{li2015min,li2021consistent,li2021rejection} in statistical machine learning. In this paper, we would like to define a kernel density on top of the $\minmax$ similarity.

\begin{definition}[$\minmax$ Density]\label{def:min-max_density}
    Given a cell dataset $X$, for every $q\in X$, we define its $\minmax$ density as:
    $\mathcal{K}(q)=\sum_{x\in X} \phi(q,x)$,
    where $\phi(q,x):\R\to \R$ is a monotonic increasing function along with $\minmax(q,x)$ similarity defined in Definition~\ref{def:min-max}.
\end{definition}

We view $\minmax(q)$ density as an indicator of how diverse $q$ is in $X$. Smaller $\minmax(q)$ means that all other $x\in X$ may be less similar to $q$, making $q$ a unique cell. On the other hand, higher $\minmax(q)$ means that $X$ has some cells that have similar gene expressions with $q$, making $q$ less unique. However, to compute $\minmax(q)$ for every $q\in X$ following  Definition~\ref{def:min-max_density}, we have to compute all pairwise $\minmax (x,y)$ for any $x,y \in X$, which results in an unaffordable $O(n^2\nnz(X))$ time complexity, where $n$ is the size of $X$ and $\nnz(X)$ is the maximum possible number of genes expressed in a cell $x\in X$. To reduce this $n^2$ computation, we propose a randomized algorithm that takes advantage of 0-bit consistent weighted sampling (CWS)~\citep{li20150} hash functions.

\begin{definition}[0-bit Consistent Weighted Sampling Hash Functions~\citep{li20150,li2021consistent}]\label{def:cws}
    Let $\mathcal{H}$ denote a randomized hash function family. 
    If we pick a $h\in \mathcal{H}$ at random, for any two cell expressions $x,y\in \R^V$, we have $\Pr[h(x)=h(y)] = \minmax(x,y)+o(1)$.
    Here every $h\in \mathcal{H}$ is a hash function that maps any $x\in X$ to an integer in $[0,B)$. We denote $B$ as the hash range.
\end{definition}
Here the $o(1)$ is a minor additive term with complex form. For simplicity, we refer the readers to \citep{li2021consistent}, Theorem 4.4 for more details. 

This work presents an efficient randomized algorithm that estimates $\minmax$ density $\mathcal{K}(q)$ (see Definition~\ref{def:min-max_density}) for every $q\in X$. As showcased in Algorithm~\ref{alg:two_pass_diverse},  we initialize an array $A$ with all values as zeros. Next, we conduct a pass over $X$. In this pass, for every $x\in X$, we compute its hash values after $R$ independent hash functions. Next, we increment $A_{r,h_r(x)}$ with $1$. After this pass, we take another pass at the dataset, for every $x\in X$, we take an average over the $A_{r,h_r(x)}$ and build a density score $w_x$. 
We would like to highlight that Algorithm~\ref{alg:two_pass_diverse} requires only two linear scans of the dataset. The time complexity for this algorithm is $O(n\nnz(X))$, which is linear to the dataset.  Moreover, we show that Algorithm~\ref{alg:two_pass_diverse} produces an estimator to $\minmax$ density.

\begin{theorem}[$\minmax$ Density Estimator, informal version of Theorem~\ref{thm:min-max_density:formal}]\label{thm:min-max_density:informal}
    Given a cell dataset $X$, for every $q\in X$, we compute $w_q$ following Algorithm~\ref{alg:two_pass_diverse}. Next, we have
    $\E[w_q] = \sum_{x\in X} (\minmax(x,q)+o(1))$,
    where $\minmax$ is the $\minmax$ similarity defined in Definition~\ref{def:min-max}.
    As a result, $w_q$ is an estimator for $\minmax$ density $\mathcal{K}(q)$ defined in Definition~\ref{def:min-max_density} with $\phi(q,x)=\minmax(x,q)+o(1)$.
\end{theorem}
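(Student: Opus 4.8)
The plan is to unwind the two passes of Algorithm~\ref{alg:two_pass_diverse} and reduce the claim to a single application of the collision-probability guarantee of Definition~\ref{def:cws}. First I would pin down what the counter array $A$ holds after the first pass: since the loop increments $A_{r,h_r(x)}$ by one for every $x\in X$ and every row $r$, on completion
\[
A_{r,b} = \sum_{x\in X}\mathbbm{1}\{h_r(x)=b\},
\]
so $A_{r,b}$ simply counts the cells landing in bucket $b$ under the $r$-th hash. Evaluating at $b=h_r(q)$ gives $A_{r,h_r(q)}=\sum_{x\in X}\mathbbm{1}\{h_r(x)=h_r(q)\}$, the number of cells that collide with $q$ in row $r$.

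Next I would read off the output from the second pass. Because $w_q=\frac1R\sum_{r=1}^R A_{r,h_r(q)}$, substituting the identity above yields
\[
w_q=\frac1R\sum_{r=1}^R\sum_{x\in X}\mathbbm{1}\{h_r(x)=h_r(q)\}.
\]
The only randomness is in the independent draws $h_1,\dots,h_R$ from $\mathcal{H}$; the dataset $X$ and the query $q$ are fixed. Taking expectations and pushing $\E$ through both (finite) sums by linearity gives $\E[w_q]=\frac1R\sum_{r=1}^R\sum_{x\in X}\Pr[h_r(x)=h_r(q)]$. Each $h_r$ has the law of a generic $h\in\mathcal{H}$, so Definition~\ref{def:cws} applies verbatim: $\Pr[h_r(x)=h_r(q)]=\minmax(x,q)+o(1)$ for every pair, independent of $r$. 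The inner probability therefore does not depend on the row index, the average over the $R$ rows collapses, and I obtain $\E[w_q]=\sum_{x\in X}(\minmax(x,q)+o(1))$ --- precisely the asserted identity, which by Definition~\ref{def:min-max_density} with $\phi(q,x)=\minmax(x,q)+o(1)$ exhibits $w_q$ as an estimator of $\mathcal{K}(q)$.

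The computation is short, so the real work is bookkeeping rather than any inequality, and the main obstacle is simply stating the $o(1)$ honestly. The term in Definition~\ref{def:cws} is a \emph{per-pair} additive error, so I would keep it inside the sum over $x$ exactly as the statement does, and resist factoring it out, since $n$ copies of an $o(1)$ quantity need not be globally $o(1)$; the informal statement sidesteps this by never pulling the error out of the sum. I would also check the self-term for consistency: when $x=q$ we have $\minmax(q,q)=1$ and trivially $\Pr[h_r(q)=h_r(q)]=1$, so $q$ contributes exactly $1$ and the two sides agree there. Everything else is linearity of expectation over indicator variables, with the i.i.d.\ structure of the rows making the $1/R$ normalization cancel.
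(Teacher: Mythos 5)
Your proposal is correct, and it reaches the paper's conclusion by a more self-contained route. The paper's own proof is two lines: it invokes Theorem~2 of \citep{coleman2019race} as a black box to get the identity $\E[w_q]=\sum_{x\in X}\Pr_{h\sim\mathcal{H}}[h(q)=h(x)]$, and then applies Definition~\ref{def:cws} to each collision probability, exactly as you do in your final step. What you do differently is reprove that cited identity from scratch: you unwind the two passes of Algorithm~\ref{alg:two_pass_diverse} to show $A_{r,b}=\sum_{x\in X}\mathbbm{1}\{h_r(x)=b\}$, write $w_q=\frac{1}{R}\sum_{r=1}^{R}\sum_{x\in X}\mathbbm{1}\{h_r(x)=h_r(q)\}$, and use linearity of expectation together with the fact that the $R$ rows are identically distributed so the $1/R$ averaging collapses. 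This is precisely the content of the RACE theorem the paper cites, so the two arguments are mathematically the same; yours buys independence from the external reference and makes the algorithm's bookkeeping explicit, while the paper's citation buys brevity and implicitly connects $w_q$ to the RACE framework's additional guarantees (e.g., variance reduction in $R$, which motivates the choice $R=O(\log|X|)$ in the algorithm and which neither your argument nor the paper's stated proof addresses). Your two side remarks --- that the $o(1)$ is a per-pair additive error that must stay inside the sum over $x$, and the sanity check that the self-term $x=q$ contributes exactly $1$ on both sides --- are careful points the paper does not make, and the first one in particular is an honest flag of a looseness that the paper's formal statement shares.
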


We provide the proof of Theorem~\ref{thm:min-max_density:informal} in the supplementary materials.

\subsection{Weighted Diversified Sampling with Inverse \texorpdfstring{$\minmax$}{~} Density}

We propose to use the inverse form of $\minmax$ density in Definition~\ref{def:min-max_density} as a score for diversity. We define it as normalized inverse $\minmax$ density as below.

\begin{definition}[Inverse $\minmax$ Density (IMD)]\label{def:inverse_minmiax_density}
    Given a cell dataset $X$, for every $q\in X$, we define its normalized inverse $\minmax$ density as 
    $\mathcal{I}(q)=\mathsf{Softmax}(1/\mathcal{K}(q))$,
    where $\mathcal{K}(q)$ is the $\minmax$ diversity for $q$ in Definition~\ref{def:min-max_density}, $\mathsf{Softmax}$ is the softmax function that takes over all cells in $X$.
\end{definition}

We view the IMD $\mathcal{I}(q)\in [0,1]$ as a monotonic increasing function for the diversity of $q$. Higher $\mathcal{I}(q)$ means that all other $x\in X$ may be less similar to $q$, making $q$ a unique cell. Moreover, IMD can be directly used as a sample probability to generate a representative subset of $X$ for Objective~\ref{obj:gene_gene_discover}. Given $X$, we perform sampling without replacement to generate a subset $X_{\mathsf{sub}}\subset X$, where $x\in X$ has the sampling probability $\mathcal{I}(x)$. The advantages of sampling with IMD (see Definition~\ref{def:inverse_minmiax_density}) can be summarized as follows.
\begin{itemize}[nosep, leftmargin=*]
    \item The IMD $\mathcal{I}(q)$ can be an effective indicator for how diverse $q$ is in dataset $X$.
    \item Computing IMD is an efficient one-shot preprocessing process with just two linear scans of $X$ with time complexity $O(n\nnz(X))$, where $n$ and $\nnz(X)$ is defined in Section~\ref{sec:WDS definition}. 
    \item The memory complexity of computing IMD is $O(RB)$, which can be viewed as constant since it is independent of $n$ and $\nnz(X)$. 
\end{itemize}

In the following section, we would like to evaluate the empirical performance of IMD in selecting a representative subset out of a massive single-cell transcriptomic dataset while still maintaining effective performance in data-driven gene-gene interaction discovery powered by CelluFormer.

\begin{definition}[Estimated Interaction Score with WDS]\label{def:estimated_interact_score}
Let $Z_x(v_i,v_j)$ denote the interaction value of gene $v_i$ and $v_j$ in the average attention map of cell $x$ obtained in the CelluFormer. For dataset $X$, we perform a sampling where each cell $x\in X$ is sampled with probability $\mathcal{I}(x)$ (see Definition~\ref{def:inverse_minmiax_density}) and  get a subset $X_{s}$. Next, we define the estimated interaction score between gene $v_i$ and $v_j$ learned from $X$ as:
\begin{align*}
    \tilde{Z}(v_i,v_j) = \frac{\sum_{x\in X_{s}}Z_x(v_i,v_j)\cdot \mathcal{I}(x)}{\sum_{x\in X_{s}}\mathcal{I}(x)},
\end{align*}
where $\tilde{Z}(v_i,v_j)$ is an unbiased estimator for the expectation of $Z(v_i,v_j)$ in distribution with density $\mathcal{I}(x)$. Formally, 
\begin{align*}
    \E[\tilde{Z}(v_i,v_j)] = \E_{x\sim \mathcal{I}(x)} [Z_x(v_i,v_j)],
\end{align*}
\begin{align*}
    \mathbf{Var}[\tilde{Z}(v_i,v_j)] 
        = ~\frac{\sum_{x\in X_s}\mathcal{I}(x)^2}{(\sum_{x\in X_s}\mathcal{I}(x))^2}\mathbf{Var}_{x\sim \mathcal{I}(x)}[Z_x(v_i,v_j)].
\end{align*}
\end{definition}

\section{Experiment}

In this section, we want to validate the effectiveness of our gene-gene interaction pipeline as well as the two-pass diversified sampling algorithm~\ref{alg:two_pass_diverse}. There are a few research questions we want to answer:

\begin{itemize}[nosep, leftmargin=*]
    \item \textbf{RQ1:} How does the proposed Transformer-based computing framework introduced in Section~\ref{sec:data-driven_gene-gene} perform in gene-gene interaction discovery?

    \item \textbf{RQ2:} How does the $\minmax$ density estimated by two-pass diversified sampling Algorithm~\ref{alg:two_pass_diverse} characterize the diversity of a cell in the whole dataset? Is this estimated  $\minmax$ density useful?

    \item \textbf{RQ3:} How does the estimated  $\minmax$ density perform in improving data-efficiency of gene-gene interaction discovery? How is the quality of the subset sampled according to the estimated  $\minmax$ density? 
\end{itemize}

\subsection{Settings}

\textbf{Dataset:} For the training dataset, we employ the Seattle Alzheimer’s Disease Brain Cell Atlas (SEA-AD) \citep{Gabitto2023}, which includes single nucleus RNA sequencing data of 36,601 genes (as 36,601 features) from 84 senior brain donors exhibiting varying degrees of Alzheimer’s Disease (AD) neuropathological changes.
By providing extensive cellular and genetic data, SEA-AD enables in-depth exploration of the cellular heterogeneity and gene expression profiles associated with AD. To facilitate a comparative analysis between AD-affected and non-AD brains, we select cells from 42 donors classified within the high-AD category and 9 donors from the non-AD category, based on their neuropathological profiles. This selection criterion ensures a robust comparison, aiding in the identification of gene-gene interactions linked to AD progression \citep{Gabitto2023}.
The dataset is comprehensively annotated, covering 1,240,908 cells across 24 distinct cell types. We selected 18 neuronal cell types as our final training dataset since we believe neuronal cells are more likely to reveal explainable gene-gene interactions that are related to Alzheimer’s Disease compared to non-neuronal cells. To better detect expression relationships among genes, we apply the Seurat Transformation Function \citep{seurat} to eliminate the problem of sequence depth difference.

\noindent \textbf{Model:} For the SEA-AD dataset, we designed a CelluFormer model as explained in \ref{sec:set_transformer} to predict labels indicative of Alzheimer's disease conditions. Further details can be found in the Appendix \ref{app:exp_details}.

\noindent \textbf{Baselines:}
Our proposed algorithm leverages the attention maps of the Transformer models. Accordingly, we compare our method with three statistical methods, Pearson Correlation, CS-CORE, and Spearman's Correlation  \citep{freedman2007statistics, Su2023, DeSmet2010}. While these methods are widely adopted by biologists for gene co-expression analysis, gene co-expression values alone do not provide information about the relationship between gene pairs and Alzheimer's Disease. To identify gene-gene interactions relevant to Alzheimer's Disease, we apply these methods to subsets containing disease and non-disease cells respectively, and calculate their gene co-expression values. The difference in co-expression values between disease and non-disease cells is then used as a final score to rank the gene pairs. We also present more experiments in Appendix \ref{app:more_exp} that demonstrate how Transformers aggregate data with varying labels.

Our baseline includes NID \citep{tsang2017detecting}, a traditional feature interpretation technique that extracts learned interactions from trained MLPs. NID identifies interacting features by detecting strongly weighted connections to a standard hidden unit in MLPs after training. We evaluated our CelluFormer model against the MLP model, with performance results presented in Table \ref{tab:model_performance}. 

Additionally, to comprehensively evaluate RQ1, we utilized two existing single-cell large foundation models to assess our algorithm. Specifically, we fine-tuned two foundation models, scFoundation \citep{Hao2024} and scGPT \citep{Cui2023}, to classify whether a cell is AD or non-AD (performance results are provided in Table \ref{tab:model_performance_complete}). We then applied our gene-gene interaction discovery pipeline using the attention maps of these foundation models.

In the sampling experiments, we compare WDS with uniform sampling since none of them requires preprocessing time exponential to the dataset size.

\noindent \textbf{Evaluation Metric:}
For a comprehensive evaluation encompassing the entire ranked list of gene-gene interactions, we utilized the Kolmogorov-Smirnov test, which was facilitated by the \texttt{GSEApy} package \citep{Fang2023} in Python. We select normalized enrichment score (NES) \citep{Subramanian2005} as our evaluation metric. 
The ground truth dataset is sourced from \textit{BioGRID} and \textit{DisGenet} \citep{Oughtred2019, gkw943}. 
For our experiments, we extract a subset of DisGenet that includes genes associated with Alzheimer's Disease. We then filter out genes in BioGRID that are not present in this DisGenet subset. Finally, we obtain a filtered BioGRID dataset containing only genes relevant to Alzheimer's Disease. We provide more explanations about our evaluation metrics in Appendix \ref{app:eval}.


\subsection{The Effectiveness of Transformers in Gene-Gene Interaction Discovery (RQ1)}

To evaluate the effectiveness of our Transformer-based framework for gene-gene interaction discovery, we performed feature selection across seven different cell types used as inference datasets. Additionally, we used a dataset encompassing all neuronal cell types to assess the overall performance of various models. As shown in Table~\ref{tab:RQ1_res}, Transformer-based methods, including CelluFormer, scGPT and scFoundation, significantly outperformed other baselines.
 
This result indicates that our proposed Transformer-based framework is more effective and stable at extracting general and global gene-gene interaction information. In addition, the foundation models, scGPT and scFoundation, achieved comparable performances with other baselines across some of the datasets. 
We attribute this outcome to two main factors. \textbf{Overfitting to Pretrained Knowledge}: A foundation model, particularly a large one, might have learned very general or specific knowledge during its pretraining phase. When fine-tuning for a specific task, the model might overfit the preexisting knowledge, leading to poor generalization of the new task data. \textbf{Mismatch Between Pretraining and Fine-Tuning Data}: If the data distribution for fine-tuning is significantly different from the data on which the foundation model was trained, the model might struggle to adapt, resulting in worse performance. A model trained from scratch on the specific task data may perform better as it directly optimizes for that data distribution.

\begin{table}[ht]
\caption{Performance comparison of models on neuronal cell data. To evaluate different models on datasets with varying sizes, we further select 7 neuronal cell types from all neuronal cell types. CelluFormer, scGPT, scFoundation, MLP, Pearson Correlation, Spearman's Correlation, and CS-CORE were tested on 8 different datasets to obtain their gene pair rankings.}
\label{tab:RQ1_res}
\centering
\resizebox{1\columnwidth}{!}{%
\begin{tabular}{lrrrrrrr}
\toprule
\multicolumn{1}{l}{\textbf{Dataset}} & \textbf{CelluFormer} & \textbf{scFoundation} & \textbf{scGPT} & \textbf{NID} & \textbf{Pearson} & \textbf{CS-CORE} & \textbf{Spearman} \\ \midrule
L5\_ET         & 1.15  & 1.04  & \textbf{1.23}  & 0.90  & 0.50  & 1.11 & 0.91 \\
L6\_CT         & 1.18  & 1.03  & 1.17  & \textbf{1.54}  & -0.21  & 1.06 & 0.72 \\
Pax6           & \textbf{1.25}  & 0.82  & 1.01  & 1.04  & 0.93  & 0.95 & 1.15 \\
L5\_6\_NP      & 1.21  & 1.06  & \textbf{1.50}  & 1.49  & 0.87  & 0.92 & 0.95 \\
L6b            & 1.13  & 0.99  &\textbf{ 1.23}  & 0.62  & 0.75  & 0.62 & 1.08 \\
Chandelier     & \textbf{1.17}  & 1.16  & 1.09  & 1.07  & 0.94  & 1.06 & 0.96 \\
L6\_IT\_Car3   & \textbf{1.22}  & 0.90  & 0.66  & 1.19  & 0.59  & 1.08 & 0.86 \\
All neuron data & \textbf{1.17}  & 1.02  & 0.99  & 0.86  & 1.01  & 1.06 & 1.04 \\
\bottomrule
\end{tabular}%
}

\end{table}

\subsection{Ablation Studies (RQ2 \& RQ3)}

We addressed these questions by comparing our weighted diversified sampling (WDS) method with uniform sampling across various sample sizes, ranging from 1\% to 10\% of the original dataset. We generated data subsets for each cell type using WDS and uniform sampling. We then applied our Transformer-based framework for feature selection at each sample size. Since CelluFormer consistently outperformed other baselines, we selected it as our base model. We repeated Each experiment five times and recorded the NES scores as the results. To evaluate the sampling methods, we calculated the average NES score across the five experiments. We also computed the Mean Square Error (MSE) between the NES scores from the sampling experiments and the ground truth derived from the entire dataset, as shown in Table \ref{tab:RQ1_res}. The evaluation results are presented in Table \ref{tab:Sampling_Res}. We note that WDS consistently produced higher NES scores compared to uniform sampling. As the sample size increased, the NES scores from uniform sampling began to converge with the ground truth. In contrast, the NES scores from WDS consistently remained close to the ground truth, even at smaller sample sizes. The result indicates that while WDS offers a significant advantage in small samples by enabling the Transformer to capture a broader range of genetic interactions, its benefits diminish as more data becomes available. Moreover, we find that for some cell types, smaller samples of data outperformed larger samples of data on NES. This suggests that: (1) single-cell transcriptomic data may contain noises that affect gene-gene interaction discovery, and, (2) some complex gene-gene interaction patterns in single-cell transcriptomic data cannot be interpreted directly through attention maps. We also provide a detailed study on the choice of parameter $R$ in Algorithm~\ref{alg:two_pass_diverse} in Appendix~\ref{sec:exp_param_r}.

\begin{table}[ht]
\scriptsize
\centering
\caption{Evaluation Results for the transformer over sample data. For each cell type, we performed 8 groups of down-sampling regarding 4 different sample sizes and 2 sampling methods. We let the transformer conduct inferences over the sample data and generate results.}
\label{tab:Sampling_Res}
\resizebox{0.7\columnwidth}{!}{%
\begin{tabular}{lcrrrrr}
\toprule
 &  & \multicolumn{2}{c}{Mean of NES} & \multicolumn{2}{c}{MSE of NES} \\ 
\multirow{-2}{*}{Dataset} & \multirow{-2}{*}{Sample Size} & \multicolumn{1}{r}{Uniform} & \multicolumn{1}{r}{WDS} & \multicolumn{1}{r}{Uniform} & \multicolumn{1}{r}{WDS} \\ \midrule
\multirow{4}{*}{L5\_ET} & 1\% & 0.90 & \textbf{0.95} & 0.0127 & \textbf{0.0082} \\
 & 2\% & 0.89 & \textbf{1.17} & 0.0131 & \textbf{0.0001} \\
 & 5\% & 1.02 & \textbf{1.19} & 0.0036 & \textbf{0.0003} \\
 & 10\% & 0.87 & \textbf{1.07} & 0.0158 & \textbf{0.0012} \\ \midrule
\multirow{4}{*}{L6\_CT} & 1\% & 0.85 & \textbf{1.19} & 0.0207 & \textbf{0.0000} \\
 & 2\% & 1.05 & \textbf{1.18} & 0.0030 & \textbf{4.30e-05} \\
 & 5\% & 0.93 & \textbf{1.23} & 0.0122 & \textbf{0.0006} \\
 & 10\% & 0.91 & \textbf{1.21} & 0.0136 & \textbf{0.0002} \\ \midrule
\multirow{4}{*}{Pax6} & 1\% & 0.94 & \textbf{1.08} & 0.0184 & \textbf{0.0053} \\
 & 2\% & 1.03 & \textbf{1.18} & 0.0098 & \textbf{0.0009} \\
 & 5\% & 0.98 & \textbf{1.20} & 0.0139 & \textbf{0.0004} \\
 & 10\% & 1.06 & \textbf{1.17} & 0.0072 & \textbf{0.0012} \\ \midrule
\multirow{4}{*}{L5\_6\_NP} & 1\% & 0.90 & \textbf{1.13} & 0.0192 & \textbf{0.0016} \\
 & 2\% & \textbf{1.15} & 1.11 & \textbf{0.0009} & 0.0021 \\
 & 5\% & 1.02 & \textbf{1.20} & 0.0076 & \textbf{4.54e-06} \\
 & 10\% & 1.01 & \textbf{1.17} & 0.0080 & \textbf{0.0004} \\ \midrule
\multirow{4}{*}{L6b} & 1\% & 0.79 & \textbf{1.17} & 0.0226 & \textbf{0.0004} \\
 & 2\% & 0.76 & \textbf{1.14} & 0.0266 & \textbf{0.0000} \\
 & 5\% & 0.88 & \textbf{1.20} & 0.0121 & \textbf{0.0009} \\
 & 10\% & 1.20 & \textbf{1.21} & \textbf{0.0010} & 0.0014 \\ \midrule
\multirow{4}{*}{L6\_IT\_Car3} & 1\% & 0.78 & \textbf{1.20} & 0.0384 & \textbf{0.0001} \\
 & 2\% & 0.87 & \textbf{1.15} & 0.0242 & \textbf{0.0011} \\
 & 5\% & 0.97 & \textbf{1.17} & 0.0123 & \textbf{0.0006} \\
 & 10\% & 0.97 & \textbf{1.18} & 0.0123 & \textbf{0.0003} \\ 
\bottomrule
\end{tabular}%
}
\vspace{-4mm}
\end{table}
\section{Related Work}
\textbf{Single-Cell Transformer Models.} Single-cell RNA sequencing (scRNA-seq), or single-cell transcriptomics, enables high-throughput insights into cellular systems, amassing extensive databases of transcriptional profiles across various cell types for the construction of foundational cellular models ~\citep{Hao2023.05.29.542705}. Recently, there has emerged a large number of transformer models pre-trained for single-cell RNA sequencing tasks, including scFoundation ~\citep{Hao2023.05.29.542705}, Geneformer ~\citep{Theodoris2023}, scMulan ~\citep{Bian2024.01.25.577152}, scGPT ~\citep{cui2024scgpt}. These foundation models have gained a progressive understanding of gene expressions and can build meaningful gene encodings over limited transcriptomic data. Yet, the previous work did not pay attention to pairwise gene-gene interactions. In our work, we would like to highlight a fundamental functionality of single-cell foundation models: we must use these models to perform data-driven scientific discovery.

\noindent \textbf{Randomized Algorithms for Efficient Kernel Density Estimation.}
Kernel density estimation (KDE) is a fundamental task in both machine learning and statistics. It finds extensive use in real-world applications such as outlier detection~\citep{luo2018arrays, coleman2020sub} and genetic abundance analysis~\citep{coleman2022one}. Recently, there has been a growing interest in applying hash-based estimators (HBE)\citep{cs17,biw19,srb+19,coleman2020sub,ss21} for KDE. HBEs leverage Locality Sensitive Hashing (LSH)\citep{indyk1998approximate,datar2004locality,li2019re} functions, where the collision probability of two vectors under an LSH function is monotonic relative to their distance measure. This property allows HBE to perform efficient importance sampling using LSH functions and hash table-type data structures. Furthermore, \citep{liu2024one} extend KDE algorithms as a sketch for the distribution. However, previous works have not considered LSH for weighted similarity. In this work, we focus on designing a new HBE that incorporates the $\minmax$ similarity~\citep{li2015min}, a weighted similarity measure.

\section{Conclusion}

Gene-gene interactions are pivotal in the development of complex human diseases, yet identifying these interactions remains a formidable challenge. In response, we have developed a pioneering approach that utilizes an advanced Transformer model to effectively reveal significant gene-gene interactions. Although Transformer models are highly effective, their extensive parameter requirements often impede efficient data processing. To overcome this limitation, we have introduced a weighted diversified sampling algorithm. This innovative algorithm efficiently calculates the diversity score of each data sample across just two passes of the dataset. With this method, we enable the rapid generation of optimized data subsets for interaction analysis. Our comprehensive experiments illustrate that by leveraging this method to sample a mere 1\% of the single-cell dataset, we can achieve results that rival those obtained using the full dataset, significantly enhancing both efficiency and scalability.

\newpage

\bibliographystyle{plainnat}
\bibliography{ref}

\newpage
\appendix
\section*{Appendix}
\section{More Related Work on Gene-Gene Interaction Discovery}

\label{app:lit_review}

In this section, we provide a more detailed review of the existing work on gene-gene interaction discovery. For gene-gene interaction network construction, genome-wide association studies (GWAS) are widely adopted by biologists to study gene associations using single-nucleotide polymorphisms (SNPs) ~\citep{Uffelmann2021}. However, GWAS have high computational costs and are simply based on direct genotype-phenotype associations instead of wired graph structure. To address this problem, many graphic models have emerged in recent years \citep{bbaa303}. Network-based methods regard gene-gene interaction discovery as a task to construct a homogeneous graph among genes. For example, PRINCE \citep{Vanunu2010} and VAVIEN \citep{Erten2011} apply random work to predict new edges on existing protein-protein interaction (PPI) or gene-gene interaction (GGI) knowledge graphs. VGAE \citep{singh2019probabilisticgenerativemodelsharnessing} and GCAS \citep{Rao2018} explore the potential to incorporate GNN and auto-encoder structure in the GGI network. In addition, existing work like DeepDRIM \citep{Chen2021} and CNNC \citep{pnas.1911536116} successfully improve the construction of GGI through inferencing gene associations on scRNA sequencing data and known transcription factors (TF). While GGI networks and TFs are instrumental for mapping biological processes, they are often plagued by high false-positive rates and context-dependent inaccuracies, especially when derived from large-scale in vitro experiments \citep{bmcbioinformatics2020, jxb2021}. Existing methods that rely on pre-established protein-protein interaction (PPI) or transcription factor (TF) networks are prone to bias because they tend to reinforce known interactions, making it difficult to objectively uncover novel gene-gene interactions. In contrast, our method circumvents this issue by directly discovering GGIs from scRNA-seq data without dependence on prior network knowledge.


\section{Proofs of Theorem~\ref{thm:min-max_density:informal}}

\begin{theorem}[$\minmax$ Density Estimator, formal version of Theorem~\ref{thm:min-max_density:informal}]\label{thm:min-max_density:formal}
    Given a cell dataset $X$, for every $q\in X$, we compute $w_q$ following Algorithm~\ref{alg:two_pass_diverse}. Next, we have
    \begin{align*}
        \E[w_q] = \sum_{x\in X} (\minmax(x,q)+o(1)),
    \end{align*}
    where $\minmax$ is the $\minmax$ similarity defined in Definition~\ref{def:min-max}.
    As a result, $w_q$ is an estimator for $\minmax$ density $\mathcal{K}(q)$ defined in Definition~\ref{def:min-max_density} with $\phi(q,x)=\minmax(x,q)+o(1)$.
\end{theorem}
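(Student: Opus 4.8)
The plan is to unwind the two passes of Algorithm~\ref{alg:two_pass_diverse} into an explicit formula for $w_q$, and then apply linearity of expectation together with the collision guarantee of the 0-bit CWS family (Definition~\ref{def:cws}). First I would characterize the state of the array $A$ after the first pass. Since each $x\in X$ increments exactly the cell $A_{r,h_r(x)}$ for every row $r$, after the pass the entry $A_{r,b}$ counts the data points landing in bucket $b$ under the $r$-th hash function, i.e.\ $A_{r,b}=\sum_{x\in X}\mathbf{1}[h_r(x)=b]$. Evaluating this at the query's bucket $b=h_r(q)$ gives $A_{r,h_r(q)}=\sum_{x\in X}\mathbf{1}[h_r(x)=h_r(q)]$, the number of points colliding with $q$ in row $r$.

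Next I would substitute this into the second pass. The algorithm sets $w_q=\frac{1}{R}\sum_{r=1}^{R}A_{r,h_r(q)}$, so
\[
w_q=\frac{1}{R}\sum_{r=1}^{R}\sum_{x\in X}\mathbf{1}[h_r(x)=h_r(q)].
\]
Taking the expectation over the random draw of the hash functions and using linearity of expectation (no independence across the $R$ rows is needed for the mean), each indicator contributes its collision probability. By the CWS guarantee, $\Pr[h_r(x)=h_r(q)]=\minmax(x,q)+o(1)$ for every row $r$ and every pair, hence
\[
\E[w_q]=\frac{1}{R}\sum_{r=1}^{R}\sum_{x\in X}\big(\minmax(x,q)+o(1)\big)=\sum_{x\in X}\big(\minmax(x,q)+o(1)\big),
\]
where the last equality holds because the summand is independent of $r$, so the $R$ identical inner sums cancel the $1/R$ normalization. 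This is exactly the claimed identity, and comparing with Definition~\ref{def:min-max_density} identifies $w_q$ as an estimator of $\mathcal{K}(q)$ with $\phi(q,x)=\minmax(x,q)+o(1)$.

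The argument is essentially linearity of expectation, so there is no deep obstacle to the mean statement; the one point that deserves care is the $o(1)$ term. As noted after Definition~\ref{def:cws}, this additive error arises from the finite hash range $B$ and is a per-pair quantity governed by Theorem~4.4 of \citep{li2021consistent}. I would state explicitly that it is the same $o(1)$ appearing in the collision guarantee, so that the estimator becomes exactly unbiased for $\mathcal{K}(q)$ as $B$ grows. I would also emphasize that the number of rows $R$ does not appear in the expectation: it enters only when one seeks concentration of $w_q$ around its mean, which would require the independence of $h_1,\dots,h_R$ together with a Chernoff- or Hoeffding-type bound. Since the theorem asserts only unbiasedness, such a concentration analysis is outside its scope.
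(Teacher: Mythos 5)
Your proof is correct and follows essentially the same route as the paper: both reduce $\E[w_q]$ to the sum of collision probabilities $\sum_{x\in X}\Pr_{h\sim\mathcal{H}}[h(x)=h(q)]$ and then invoke the 0-bit CWS guarantee of Definition~\ref{def:cws}, concluding with the identification $\phi(q,x)=\minmax(x,q)+o(1)$. The only difference is presentational: the paper imports this intermediate identity from Theorem~2 of \citep{coleman2019race}, whereas you derive it directly by unwinding the two passes of Algorithm~\ref{alg:two_pass_diverse} and applying linearity of expectation, which makes your argument self-contained (and your observation that independence of the $R$ rows is needed only for concentration, not for unbiasedness, is also accurate).
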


\begin{proof}
    According to Theorem 2 in \citep{coleman2019race}, the expectation of $w_q$ should be:
    \begin{align*}
        \E[w_q]=\sum_{x\in X} \Pr_{h\sim \mathcal{H}}[h(q)=h(x)]
    \end{align*}

    According to Definition~\ref{def:cws}, we have
    \begin{align*}
        \Pr_{h\sim \mathcal{H}}[h(q)=h(x)] = \minmax(x,q)+o(1)
    \end{align*}.

    As a result, 
    \begin{align*}
        \E[w_q]=\sum_{x\in X} (\minmax(x,q)+o(1))
    \end{align*}

    Moreover, since $\minmax(x,q)+o(1)$ is a monotonic increasing function of $\minmax(x,q)$. We say that $w_q$ is an estimator for $\minmax$ density $\mathcal{K}(q)$ defined in Definition~\ref{def:min-max_density} with $\phi(q,x)=\minmax(x,q)+o(1)$.
\end{proof}


\section{Experiment Details}
\subsection{Model Implementations}\label{app:exp_details}


\textbf{Transformer Configurations:} In this work, we used the standard multi-head self-attention introduced in \citep{vaswani2017attention}. We do not see the potential of the proposed blocks in ~\citep{lee2019set} in our setting. Moreover, we perform padding on each batch of training and inference of single-cell data. Accordingly, we introduce a padding mask in the attention mechanism to avoid computation on the padded position. For each input sequence, we represent them as embedding by a lookup table that maps a vocabulary of 36,601 genes to 128-dimensional vectors. Subsequently, the embedded data passes through 4 transformer encoder blocks. Each encoder block features 8 attention heads, to capture complex, non-linear relationships within the data. Finally, the output is fed into a linear layer that classifies the data labels. Here the label for the cell can be disease-oriented, such as whether this cell is from an Alzheimer's disease patient. We represent each input sequence by employing a lookup table that transforms a comprehensive vocabulary of 36,601 genes into 128-dimensional embedding vectors. These vectors are subsequently processed through a series of 4 Transformer encoder blocks. Each encoder block is equipped with 8 attention heads, a 512-dimensional feedforward layer, and a dropout layer in a ratio of 0.1. The processed outputs are then directed to a linear classification layer, which is tasked with predicting labels indicative of Alzheimer's disease conditions. We adopted the Adam Optimization Algorithm to minimize the loss function ~\cite{kingma2017adam}. The model is trained under a learning rate of 1e-5 and the batch size of our data-loader is set as 128. The testing results for the transformer after 3 epochs of training are given in Table \ref{tab:model_performance}.


\textbf{MLP Configurations:} The MLP consists of 2 hidden layers, with 128 and 256 hidden units respectively. Each hidden layer is followed by a dropout and a Softplus module. The MLP is trained under a learning rate of 1e-4 and the batch size of our data-loader is set as 128. We adopted the Adam Optimization Algorithm to minimize the loss function ~\cite{kingma2017adam}. The testing results for the MLP after 80 epochs of training are given in Table \ref{tab:model_performance}.

\begin{table}[h]
\caption{Complete Performance comparison of models on neuronal cell data.}
\label{tab:model_performance_complete}
\centering
\begin{tabular}{cccc}
\toprule
\textbf{Model} & \textbf{Training Dataset} & \textbf{F1 Score} & \textbf{Accuracy} \\
\midrule
\multirow{7}{*}{MLP} & Pax6 & 78.91 & 82.71 \\
 & L5\_ET & 62.02 & 73.31 \\
 & L6\_CT & 91.14 & 92.01 \\
 & L6\_IT\_Car3 & 95.34 & 95.51 \\
 & L6b & 86.01 & 88.76 \\
 & Chandelier & 81.66 & 84.56 \\
 & L5\_6\_NP & 89.33 & 90.42 \\
 & All Neuronal Cell Types & 97.23 & 97.25 \\
\midrule
CelluFormer & All Neuronal Cell Types & \textbf{98.12} & \textbf{98.12} \\
scGPT & All Neuronal Cell Types & {93.85} & {94.32} \\
scFoundation & All Neuronal Cell Types & {97.38} & {97.39} \\
\bottomrule
\end{tabular}%
\end{table}

\textbf{Fine-tuning configurations for scFoundatoin and scGPT:}
For fine-tuning scGPT, we use an LR of 1e-4 and a batch size of 64. We utilize a step scheduler down to 90\% of the original learning rate every 10 steps. The training process converges after 6 epochs. For scFoundation, we use an LR of 1e-4 and a batch size of 32. We fine-tune scFoundation for 10 epochs. The performances of scFoundation and scGPT on classifying disease cells are shown in Table \ref{tab:model_performance_complete}.

\textbf{Implementation and Computation Resources:} Our codebase and workflow are implemented in PyTorch \cite{paszke2017automatic}. We trained and tested our workflow on a server with 8 Nvidia Tesla V100 GPU and a 44-core/88-thread processor (Intel(R) Xeon(R) CPU E5-2699A v4 @ 2.40GHz).

\subsection{Evaluation Metrics}

\label{app:eval} The normalized enrichment score (NES) is the main metric used to analyze gene set enrichment outcomes \cite{Subramanian2005}. This score quantifies the extent of over-representation of a ground truth dataset at the top of the ranked list of gene-gene interactions. That is, the higher the better. We can calculate NES by starting at the top of the ranked list and moving through it, adjusting a running tally by increasing the score for each gene-gene interaction in the ground truth dataset and decreasing it for others based on each gene-gene interaction's rank. This process continues until we evaluate the entire ranked list to identify the peak score, which is the enrichment score. The BioGRID Dataset provides human protein/genetic interactions. Specifically, \textit{BioGRID} contributes $204,831$ protein/genetic interactions that help verify the enrichment of genuine biological interactions in a ranked list of gene-gene interactions. DisGenet contains 429,036 gene-disease associations (GDAs), connecting 17,381 genes to 15,093 diseases, disorders, and abnormal human phenotypes \cite{Oughtred2019, gkw943}.
\section{More Experments}

\subsection{Contrastive Ranking} \label{app:more_exp}

Here, we also explore alternative strategies for aggregating attention maps. While Pearson Correlation, Spearman's Correlation, and CS-CORE themselves cannot capture the information between gene pairs the the target disease, we believe Transformers learn the difference among data with varying labels. Hence, we do not need to calculate the difference between attention maps aggregated on data with varying labels. However, given that the Transformer is trained to classify disease cells, we hypothesize that it likely assigns significant attention to specific gene pairs within disease cells. To evaluate this, we applied our pipeline to three distinct datasets. The experimental results summarized in Table \ref{tab:aggregate_exp} show that our pipeline achieves improved NES when both disease and non-disease cells are used as inputs. These findings suggest that the Transformer benefits from data both positive and negative labels to provide a more comprehensive understanding of features. 

\begin{table}[ht]
\caption{This experiment involves three groups. In the first group, the Transformer only takes the disease cells for inference. We directly evaluate the ranked list given by aggregated attention map across disease cells. In the second group, we calculate the aggregated attention maps on the disease cells and the non-disease cells respectively. The final attention map is obtained by subtracting these two attention maps. The third group is to aggregate attention maps across the whole dataset.}
\label{tab:aggregate_exp}
\centering
\resizebox{0.9\columnwidth}{!}{%
\begin{tabular}{cccccccc}
\toprule
Strategy & L5\_ET & L6\_CT & Pax6 & L5\_6\_NP & L6b & Chandelier & L6\_IT\_Car3 \\ \midrule
AD cells & 1.09 & 1.09 & 0.98 & 0.78 & 1.13 & 0.90 & 0.89 \\
AD cells - Non-AD cells & 1.08 & 0.89 & 1.05 & 0.76 & 0.82 & 0.65 & \textbf{1.39} \\
All cells & \textbf{1.15} & \textbf{1.18} & \textbf{1.25} & \textbf{1.21} & \textbf{1.13} & \textbf{1.17} & 1.22 \\ \bottomrule
\end{tabular}%
}
\end{table}

\subsection{Empirical Study on Parameter \texorpdfstring{$R$}{~} in Algorithm~\ref{alg:two_pass_diverse}}\label{sec:exp_param_r}

\begin{table}[ht]
\caption{The Mean value of NES results across 5 experiments on L5\_ET, L6\_CT, and Pax6 cell type datasets.}
\label{tab:Mean_hash_row}
\centering
\resizebox{0.9\columnwidth}{!}{%
\begin{tabular}{lcrrrr}
\toprule
\multicolumn{1}{c}{\multirow{2}{*}{Dataset}} & \multirow{2}{*}{Sample Size} & \multicolumn{4}{c}{Mean of NES} \\
\multicolumn{1}{c}{} &  & Uniform & WDS with R=100 & WDS with R=200 & WDS with R=500 \\ \midrule
\multirow{4}{*}{L5\_ET} & 1\% & 0.90 & \textbf{1.02} & 0.95 & 0.93 \\
 & 2\% & 0.89 & \textbf{1.17} & \textbf{1.17} & 0.97 \\
 & 5\% & 1.02 & 0.97 & \textbf{1.19} & 1.11 \\
 & 10\% & 0.87 & 1.01 & \textbf{1.07} & \textbf{1.07} \\ \midrule
\multirow{4}{*}{L6\_CT} & 1\% & 0.85 & \textbf{1.19} & \textbf{1.19} & 1.11 \\
 & 2\% & 1.05 & \textbf{1.21} & 1.18 & 1.09 \\
 & 5\% & 0.93 & 1.13 & \textbf{1.23} & 1.21 \\
 & 10\% & 0.91 & \textbf{1.23} & 1.21 & 1.20 \\ \midrule
\multirow{4}{*}{Pax6} & 1\% & 0.94 & \textbf{1.13} & 1.08 & 1.17 \\
 & 2\% & 1.03 & \textbf{1.22} & 1.18 & 1.19 \\
 & 5\% & 0.98 & \textbf{1.21} & 1.20 & 1.19 \\
 & 10\% & 1.06 & 1.19 & 1.17 & \textbf{1.22} \\
\bottomrule
\end{tabular}%
}
\end{table}

During our experiments on WDS, we observed that the value of $R$ (see Algorithm~\ref{alg:two_pass_diverse}) has a noticeable impact on NES performance. In Table \ref{tab:Mean_hash_row} and Table \ref{tab:MSE_hash_row}, we evaluate three different $R$ values ranging from 100 to 500. The results demonstrate that increasing $R$ leads to a significant decline in NES. Although WDS with smaller $R$ values yields relatively higher NES, it tends to diverge from the NES calculated on the entire dataset.

\begin{table}[ht]
\caption{The MSE of NES results across 5 experiments on L5\_ET, L6\_CT, and Pax6 cell type datasets. The MSE values are calculated according to the results in Table \ref{tab:RQ1_res}.}
\label{tab:MSE_hash_row}
\centering
\resizebox{0.9\columnwidth}{!}{%
\begin{tabular}{lcrrrr}
\toprule
\multicolumn{1}{c}{\multirow{2}{*}{Dataset}} & \multirow{2}{*}{Sample Size} & \multicolumn{4}{c}{MSE of NES} \\
\multicolumn{1}{c}{} &  & Uniform & WDS with R=100 & WDS with R=200 & WDS with R=500 \\ \midrule
\multirow{4}{*}{L5\_ET} & 1\% & 0.0636 & 0.0408 & \textbf{0.0178} & 0.0477 \\
 & 2\% & 0.0653 & 0.0005 & \textbf{0.0004} & 0.0339 \\
 & 5\% & 0.0181 & \textbf{0.0014} & 0.0310 & 0.0018 \\
 & 10\% & 0.0790 & \textbf{0.0062} & 0.0192 & 0.0064 \\ \midrule
\multirow{4}{*}{L6\_CT} & 1\% & 0.1033 & 0.0002 & 0.0002 & 0.0046 \\
 & 2\% & 0.0151 & \textbf{0.0001} & 0.0014 & 0.0070 \\
 & 5\% & 0.0610 & 0.0028 & \textbf{0.0025} & 0.0013 \\
 & 10\% & 0.0681 & 0.0011 & 0.0031 & \textbf{0.0007} \\ \midrule
\multirow{4}{*}{Pax6} & 1\% & 0.0920 & 0.0264 & 0.0135 & \textbf{0.0057} \\
 & 2\% & 0.0488 & 0.0047 & \textbf{0.0006} & 0.0027 \\
 & 5\% & 0.0695 & 0.0022 & \textbf{0.0015} & 0.0027 \\
 & 10\% & 0.0362 & 0.0058 & 0.0028 & \textbf{0.0008} \\ \bottomrule
\end{tabular}%
}
\end{table}

\end{document}